\newcommand{\citet}[1]{\citeauthor{#1} ̃\shortcite{#1}}
\newcommand{\citep}{\cite}
\DeclareMathOperator*{\argmin}{arg\,min}
\DeclareMathOperator{\Exp}{\mathbb{E}}
\DeclareMathOperator{\Var}{\mathbb{V}}
\newcommand{\norm}[1]{\left\lVert #1 \right\rVert}
\newcommand{\abs}[1]{\left| #1 \right|}
\newcommand{\E}[2][]{\Exp_{#1}\left[ #2 \right]}
\newcommand{\Tr}[1]{\text{Tr}\left( #1 \right)}
\newcommand{\V}[2][]{\Var_{#1}\left[ #2 \right]}
\newcommand{\diff}[2]{\frac{\partial #1}{\partial #2}}
\newcommand{\ddiff}[2]{\frac{\partial^2 #1}{\partial #2^2}}
\newtheorem{theorem}{Theorem}
\newtheorem{lemma}{Lemma}
\newcounter{nbdrafts}
\newcommand{\checknbdrafts}{
\ifnum \thenbdrafts > 0
\@latex@warning@no@line{*WARNING* The document contains \thenbdrafts \space draft note(s)}
\fi}
\title{Biased Importance Sampling for Deep Neural Network Training}
\author{
    Angelos Katharopoulos \quad Fran{\c{c}}ois Fleuret \\
    Idiap Research Institute, Martigny, Switzerland \\
    {\'E}cole Polytechique F{\'e}d{\'e}rale de Lausanne (EPFL), Lausanne, Switzerland \\
    \texttt{\{name.surname\}@idiap.ch}
}
\begin{document}

\maketitle

\begin{abstract}

Importance sampling has been successfully used to accelerate stochastic
optimization in many convex problems. However, the lack of an efficient way to
calculate the importance still hinders its application to Deep Learning.

In this paper, we show that the loss value can be used as an alternative
importance metric, and propose a way to efficiently approximate it for a deep
model, using a small model trained for that purpose in parallel.

This method allows in particular to utilize a biased gradient estimate that
implicitly optimizes a soft max-loss, and leads to better generalization
performance. While such method suffers from a prohibitively high variance of
the gradient estimate when using a standard stochastic optimizer, we show that
when it is combined with our sampling mechanism, it results in a reliable
procedure.

We showcase the generality of our method by testing it on both image
classification and language modeling tasks using deep convolutional and
recurrent neural networks. In particular, our method results in 30\% faster
training of a CNN for CIFAR10 than when using uniform sampling.

\end{abstract}

\section{Introduction}

The dramatic increase in available training data has made the use of Deep
Neural Networks feasible, which in turn has significantly improved the
state-of-the-art in many fields, in particular Computer Vision and Natural
Language Processing. However, due to the complexity of the resulting
optimization problem, computational cost is now the core issue in training
these large architectures.

When training such models, it appears to any practitioner that not all samples
are equally important; many of them are properly handled after a few epochs of
training, and most could be ignored at that point without impacting the final
model. To this end, we propose a novel importance sampling scheme that
accelerates the training, in theory, of any Neural Network architecture.

For convex optimization problems, many works
\cite{bordes2005fast,zhao2015stochastic,needell2014stochastic,canevet-et-al-2016,richtarik2013optimal}
have taken advantage of the difference in importance among the samples to
improve the convergence speed of stochastic optimization methods. However, only
recently, researchers \cite{alain2015variance,loshchilov2015online} have
shifted their focus on using importance sampling for training Deep Neural
Networks. Compared to these works, we propose an importance sampling scheme
based on the loss and show that our method can be used to improve the
convergence of various architectures on realistic text and image datasets, while
at the same time using a minimal number of hyperparameters.

\citet{zhao2015stochastic} prove that sampling with a distribution proportional
to the gradient norm of each sample, is optimal in minimizing the variance of
the gradients; thus resulting in a convergence speed-up for Stochastic Gradient
Descent (SGD). However, using the gradient norm requires computing second-order
quantities during the backward pass, which is computationally prohibitive. We
show, theoretically, that using the loss, we can construct an upper bound to
the gradient norm that is better than uniform. Nevertheless, computing the loss
still requires a full forward pass, hence using it directly on all the samples
remains intractable. In order to reduce the computational cost even more, we
propose to use the prediction of a small network, trained alongside the deep
model we want to eventually train, to predict an approximation of the
importance of the training samples. The complexity of this surrogate allows us
to modulate the cost / accuracy trade-off.

Finally, we also show a relationship between importance sampling and
maximum-loss minimization, which can be used to improve the
generalization ability of the trained Deep Neural Network.

In summary, the contributions of this work are:
\begin{itemize}
    \item We show that using the loss, we can construct a sampling distribution
        that reduces the variance of the gradients compared to uniform sampling
    \item The creation of a model able to approximate the loss for a
      low computational overhead
    \item The development of an online algorithm that minimizes a soft
      max-loss in the training set through importance sampling
\end{itemize}

\section{Related Work}

Importance sampling for convex problems has received significant attention over
the years. \citet{bordes2005fast} developed LASVM, which is an online algorithm
that uses importance sampling to train kernelized support vector machines.
Later \citet{needell2014stochastic} and more recently \citet{zhao2015stochastic}
developed more general importance sampling methods that improve the convergence
of Stochastic Gradient Descent. In particular, the latter has crucially
connected the convergence speed of SGD with the variance of the gradient
estimator and has shown that the target sampling distribution is the one that
minimizes this variance.

\citet{alain2015variance} are the first ones, to our knowledge, that
attempted to use importance sampling for training Deep Neural Networks. They
sample according to the exact gradient norm as computed by a cluster of GPU
workers. Even with a cluster of GPUs they have to constrain the networks that
they use to fully connected layers in order to be able to compute the gradient
norm in a reasonable time. Compared to their approach, the proposed method
achieves wall-clock time speed-up, for any architecture, requiring the same
resources as plain uniform sampling.

\citet{loshchilov2015online} looked towards the loss value to build a sampling
distribution for mini-batch creation for Deep Neural Networks. Their method
provides the Neural Network with samples that have been previously seen to have
high loss. The most important limitation of their work is the introduction of
several hard to choose hyperparameters that also impede the generalization of
their method to datasets other than MNIST. On the other hand, we conduct
experiments with more challenging image and text datasets and show that our
method generalizes well without the need to choose any hyperparameters.

\citet{canevet-et-al-2016} worked on improving the sampling procedure for
importance sampling. They imposed a prior tree structure on the weights,
and use a sampling procedure inspired by the Monte Carlo Tree Search algorithm,
that handles properly the exploration / exploitation dilemma and converges
asymptotically to the probability distribution of the weights.
However, this method fully relies on the existence of the tree structure,
that should reflect the regularity of the importance on the samples, which
is in itself a quite complicated embedding problem.

Finally, there is another class of methods related to importance sampling that
can be perceived as using an importance metric quite antithetical to most
common methods. Curriculum learning \cite{bengio2009curriculum} and its
evolution self-paced learning \cite{kumar2010self} present the classifier with
easy samples first (samples that are likely to have a small loss) and gradually
introduce harder and harder samples.

\section{Importance Sampling}

Importance sampling aims at increasing the convergence speed of SGD by reducing
the variance of the gradient estimates. In the following sections, we analyze
how this works and present an efficient procedure that can be used to train any
Deep Learning model. Moreover, we also show that our importance sampling method
has a close relation to maximum loss minimization, which can result in improved
generalization performance.

\subsection{Exact Importance Sampling}

Let $x_i$, $y_i$ be the $i$-th input-output pair from the training set, $\Psi(\cdot;
\theta)$ a Deep Learning model parameterized by the vector $\theta$, and
$L(\cdot, \cdot)$ the loss function to be minimized during training.
The goal of training is to find
\begin{equation}
\theta^* = \argmin_\theta \frac{1}{N} \sum_{i=1}^N L(\Psi(x_i; \theta), y_i)
\end{equation}
where $N$ corresponds to the number of examples in the training set. Using
Stochastic Gradient Descent with learning rate $\eta$, we iteratively update
the parameters of our model, between two consecutive iterations $t$ and $t+1$,
with
\begin{equation}
\theta_{t+1} = \theta_t - \eta \alpha_i \nabla_{\theta_t} L(\Psi(x_i; \theta_t), y_i)
\end{equation}
where $i$ is a discrete random variable sampled according to a distribution $P$
with probabilities $p_i$ and $\alpha_i$ is a sample weight. For instance, plain SGD with
uniform sampling is achieved with $\alpha_i = 1$ and $p_i = \frac{1}{N}$ for all $i$.

If we define the convergence speed $S$ of SGD as the reduction of the distance
of the parameter vector $\theta$ from the optimal parameter vector $\theta^*$
in two consecutive iterations $t$ and $t+1$
\begin{equation} \label{eq:convergence_speed_first_norms}
S = -\E[P]{\norm{\theta_{t+1} - \theta^*}_2^2 - \norm{\theta_{t} - \theta^*}_2^2},
\end{equation}
and if we have
\begin{equation}
\E[P]{\alpha_i \nabla_{\theta_t} L(\Psi(x_i; \theta_t), y_i)} =
    \nabla_{\theta_t} \frac{1}{N} \sum_{i=1}^N L(\Psi(x_i; \theta_t), y_i),
\end{equation}
and set $G_i = \alpha_i \nabla_{\theta_t}L(\Psi(x_i; \theta_t), y_i)$,
then we get (this is a different derivation of the result by \citet{wang2016accelerating})
\begin{equation} \label{eq:convergence_speed_first} 
\begin{aligned}
S
    & = -\E[P]{
        \left(\theta_{t+1} - \theta^*\right)^T \left(\theta_{t+1} - \theta^*\right) -
        \left(\theta_t - \theta^*\right)^T \left(\theta_t - \theta^*\right)
    } \\
    & = -\E[P]{
        \theta_{t+1}^T \theta_{t+1} - 2 \theta_{t+1} \theta^* -
        \theta_{t}^T \theta_{t} + 2 \theta_{t} \theta^*
    } \\
    & = -\E[P]{
        \left(\theta_t - \eta G_i \right)^T \left(\theta_t - \eta G_i \right) +
        2 \eta G_i^T \theta^* - \theta_t^T \theta_t
    } \\
    & = -\E[P]{
        -2 \eta \left(\theta_t - \theta^*\right) G_i + \eta^2 G_i^T G_i
    } \\
    & = 2 \eta \left(\theta_t - \theta^*\right) \E[P]{G_i} -
        \eta^2 \E[P]{G_i}^T \E[P]{G_i} - \\
    & \quad \, \eta^2 \Tr{\V[P]{G_i}}
\end{aligned}
\end{equation}

From the last expression, we observe that it is possible to gain a speedup by
sampling from the distribution that minimizes $\Tr{\V[P]{G_i}}$.
\citet{alain2015variance} and \citet{zhao2015stochastic} show that this
distribution has probabilities $p_i \propto \norm{\nabla_{\theta_t} L(\Psi(x_i;
\theta_t), y_i)}_2$. However, computing the norm of the gradient for each
sample is computationally intensive. \citet{alain2015variance} use a
distributed cluster of workers and constrain their models to fully connected
networks while \citet{zhao2015stochastic} only consider convex problems and
sample according to the Lipschitz constant of the loss of each sample, which is
an upper bound of the gradient norm.

To mitigate the computational requirement, we propose to use the loss itself as
the importance metric instead of the gradient norm. Although providing the
network with a larger number of confusing examples makes intuitive sense, we
show that the loss can be used to create a tighter upper bound to the gradient
norm than plain uniform sampling using the following theorem.

\begin{theorem} \label{the:loss_upper_bound}
Let $G_i = \norm{\nabla_{\theta_t} L(\Psi(x_i; \theta_t), y_i)}$ and $M = \max
G_i$. There exist $K > 0$ and $C < M$ such that
\begin{equation}
\frac{1}{K} L(\Psi(x_i; \theta_t), y_i) + C \geq G_i \quad \forall i
\end{equation}
\end{theorem}

The above theorem is derived using the fact that the ordering of samples
according to the loss is reflected by their ordering according to the gradient
norm. A complete proof of the theorem is provided in the supplementary
material. Despite the fact that theorem \ref{the:loss_upper_bound} only proves
that sampling using the loss is an improvement compared to uniform sampling, in
our experiments we show that it exhibits similar variance reducing properties
to sampling according to the gradient norm thus resulting in convergence speed
improvement compared to uniform sampling.

To sum up, we propose the following importance sampling scheme that
creates an unbiased estimator of the gradient vector of Batch Gradient
Descent with lower variance:
\begin{align}
    p_i      & \propto L(\Psi(x_i; \theta_t), y_i) \\
    \alpha_i & = \frac{1}{N p_i}.
\end{align}

\subsection{Relation to Max Loss Minimization} \label{sec:max_loss_theory}

Minimizing the average loss over the training set does not necessarily result
in the best model for classification. \citet{shalev2016minimizing} argue that
minimizing the maximum loss can lead to better generalization performance,
especially if there exist a few ``rare'' samples in the training set.
In this section, we show that introducing a minor bias inducing modification in
the sample weights $\alpha_i$, we are able to focus on the high-loss samples
with a variable intensity up to the point of minimizing the maximum loss.

Instead of choosing the sample weights $\alpha_i$ such that we
get an unbiased estimator of the gradient, we define them according to
\begin{equation} \label{eq:biased_weights}
\alpha_i = \frac{1}{N p_i^k}, \ \text{with} \ k \in (-\infty, 1].
\end{equation}
Using $L_i = L(\Psi(x_i; \theta), y_i)$ and $p_i \propto L_i$, we get
\begin{align}
\E[P]{\alpha_i \nabla_{\theta} L_i}
& = \sum_{i=1}^N p_i \alpha_i \nabla_{\theta} L_i \\
& = \sum_{i=1}^N \frac{p_i^{1-k}}{N} \nabla_{\theta} L_i \\
& \propto \sum_{i=1}^N \frac{ L_i^{1-k}}{N} \nabla_{\theta} L_i \\
& \propto \sum_{i=1}^N \frac{1}{N} \nabla_{\theta} L_i^{2-k}, \label{eq:soft_max_loss}
\end{align}
which is an unbiased estimator of the gradient of the original loss
function raised to the power $2-k \geq 1$. When $2-k \gg 1$ we essentially
minimize the maximum loss but as it will be analyzed in the experiments,
smaller values can be helpful both in increasing the convergence speed and
improving the generalization error.

\subsection{Approximate Importance Sampling} \label{sec:approx_is}

\begin{figure*}
\makebox[\textwidth][c]{
    \begin{subfigure}[t]{0.49\textwidth}
        \includegraphics[width=\linewidth]{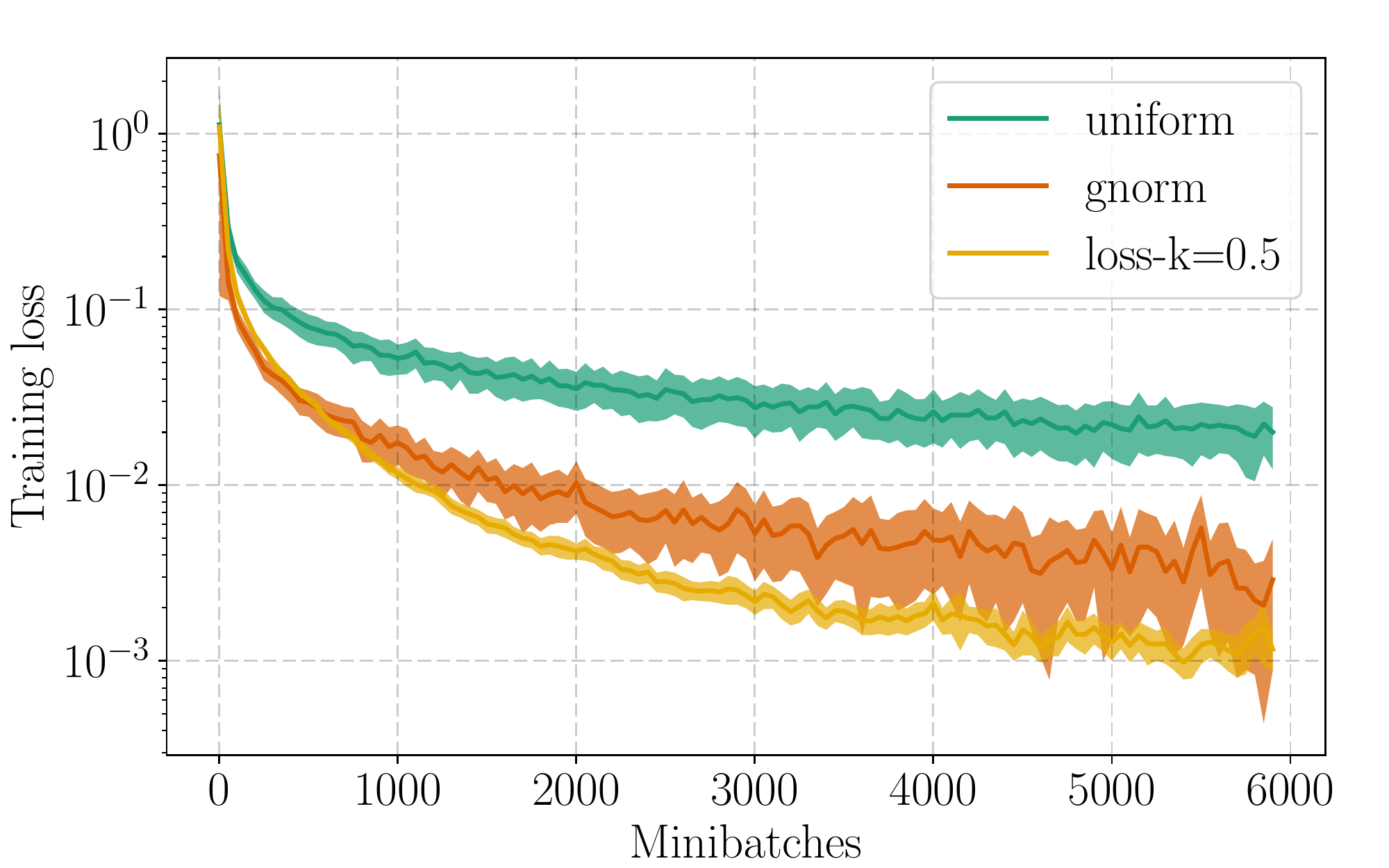}
        \caption{Variance reduction} \label{fig:loss_vs_gnorm}
    \end{subfigure}
    ~
    \begin{subfigure}[t]{0.49\textwidth}
        \includegraphics[width=\linewidth]{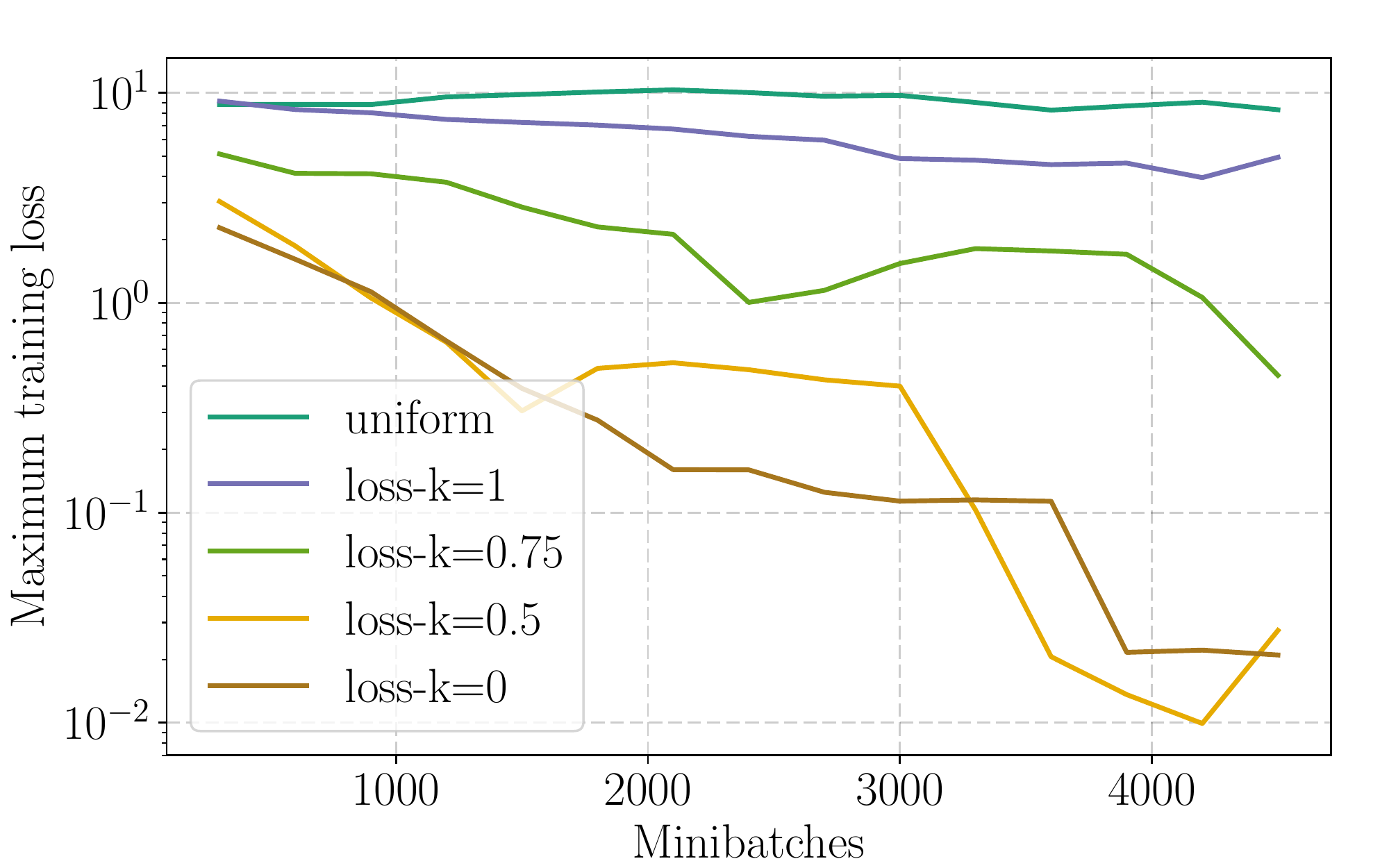}
        \caption{Effect of $k$} \label{fig:k_effects}
    \end{subfigure}
    }
    \caption{Averaged results of $10$ independent runs on MNIST. Solid lines in
    Figure~\ref{fig:loss_vs_gnorm} show the moving average of the training loss
    and shaded areas show the moving standard deviation.
    Figure~\ref{fig:k_effects} shows the 5 point moving average of the
    \emph{maximum} training loss for different values of the hyperparameter
    $k$. Smaller values of $k$ minimize the maximum loss in contrast to $k=1$,
    which corresponds to unbiased importance sampling, that behaves similar to
    uniform sampling.} \label{fig:mnist}
\end{figure*}

Although by using the loss instead of the gradient norm, we
simplify the problem and make it straightforward for use with Deep Learning
models, calculating the loss for a portion of the training set is still
prohibitively resource intensive. To alleviate this problem and make importance
sampling practical, we propose approximating the loss
with another model, which we train alongside our Deep Learning model.

Let $\mathcal{H}_t = \{(j, \tau, L_j^{\tau}) \mid j \in \{0, 1, \dots, N\},
\tau \leq t\}$ be the history of the losses where the triplet $(j, \tau,
L_j^{\tau})$ denotes the sample index $j$, the iteration index $\tau$ and the
value of the loss for sample $j$ at iteration $\tau$ (samples seen in the same
mini-batch appear in the history as triplets with the same $\tau$.).

Our goal is to learn a
model $M(x_i, y_i, \mathcal{H}_{t-1}) \approx L(\Psi(x_i; \theta_t), y_i)$ that
has negligible computational complexity compared to $\Psi(x_i; \theta_t)$. The
above formulation can be used to define any model, including approximations of
the original neural network $\Psi(\cdot)$. In order to create lightweight
models, that do not impact the performance of a single forward-backward pass (or
impact it minimally), we focus on models that use the class information and the
history of the losses. Specifically, we consider models that map the history
and the class to two separate representations that are then combined with a
simple linear projection.

To generate a representation for the history, we run an LSTM over the previous
losses of a sample and return the hidden state. The use of an LSTM allows the
history of other samples to influence the representation through the shared
weights. Let this mapping be represented by $M_h(j, \mathcal{H}_{t-1}; \pi_h)$
parameterized by $\pi_h$. Regarding the class mapping, we use a simple
embedding, namely we map each class to a specific vector in $\mathbb{R}^D$. Let
this mapping be $M_y(y_j; \pi_y)$  parameterized by $\pi_y$.

\begin{algorithm}
\caption{Approximate importance sampling} \label{alg:training}
\begin{algorithmic}[1]
    \State Assume inputs $\eta$, $\pi_0$, $\theta_0$, $k \in (-\infty, 1]$, $X = \{x_1,
           x_2, \dots, x_N\}$ and $Y = \{y_1, y_2, \dots, y_N\}$
    \State $t \gets 0$
    \Repeat
        \State $S \sim \text{Uniform}(1, N)$
            \Comment{Sample a portion of the dataset for further speedup}
        \State $p_i \propto M(i, y_i, \mathcal{H}_t) \, \forall i \in S$
        \State $s \sim \text{Multinomial}(P)$ \label{alg:sample}
        \State $\alpha \gets \frac{1}{N p_s^k}$
        \State $\theta_{t+1} \gets \theta_t - \eta \alpha \nabla_{\theta_t}
            L(\Psi(x_s ; \theta_t), y_s)$ \label{alg:sgd1}
        \State $\pi_{t+1} \gets \pi_t - \eta \nabla_{\pi_t}
            M(s, y_s, \mathcal{H}_t; \pi_t)$ \label{alg:sgd2}
        \State $\mathcal{H}_{t+1} \gets \mathcal{H}_t \cup
            \{(s, t, L(\Psi(x_s ; \theta_t), y_s))\}$
        \State $t \gets t + 1$
    \Until{convergence}
\end{algorithmic}
\end{algorithm}

Finally, we solve the following optimization problem and learn a function that
predicts the importance of each sample for the next training iteration.
\begin{equation} \label{eq:approx_training}
\begin{aligned}
M_i &= M\big( M_h(i, \mathcal{H}_{t-1}; \pi_h), M_y(y_i; \pi_y) ; \pi \big) \\
L_i &= L\big(\Psi(x_i; \theta_t), y_i\big) \\
\pi^*, \pi_h^*, \pi_y^* &= \argmin_{\pi, \pi_h, \pi_y} \frac{1}{N} \sum_{i=1}^N 
    (M_i - L_i)^2
\end{aligned}
\end{equation}

The precise training procedure is described in pseudocode in
algorithm~\ref{alg:training} where, for succinctness, we use $M(\cdot; \pi)$ to
denote the composition and the parameters of the models $M(\cdot)$,
$M_h(\cdot)$ and $M_y(\cdot)$.

\subsubsection{Smoothing} \label{sec:smoothing}

Modern Deep Learning models often contain stochastic layers, such as Dropout,
that given a set of constant parameters and a sample can result into vastly
different outputs for different runs. This fact, combined with the inevitable
approximation error of our importance model, can result into pathological cases
of samples being predicted to have a small importance (thus large weight
$\alpha_i$) but ending up having high loss.

To alleviate this problem we use \emph{additive smoothing} to influence the
sampling distribution towards uniform sampling. We observe experimentally, that
a good rule of thumb is to add a constant $c$ such that $c \leq \frac{1}{2N}
\sum_{i=1}^N L(\Psi(x_i; \theta_t), y_i)$ for all iterations $t$.

\section{Experiments}\label{sec:experiments}

\begin{figure*}
    \makebox[\textwidth][c]{
        \begin{subfigure}[t]{0.49\textwidth}
            \includegraphics[width=\linewidth]{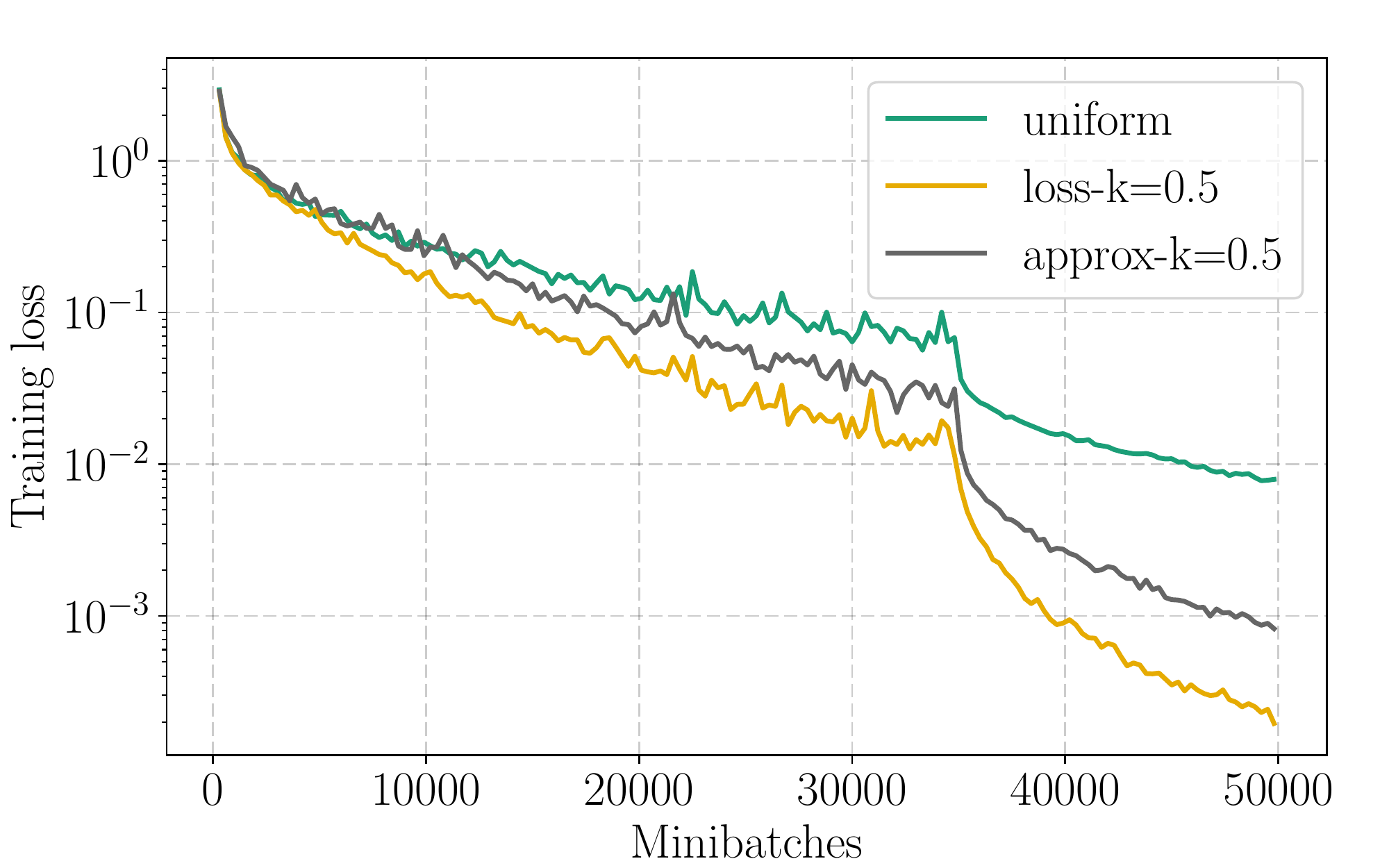}
            \caption{Speed-up per epochs} \label{fig:cifar_epochs}
        \end{subfigure}
        ~
        \begin{subfigure}[t]{0.49\textwidth}
            \includegraphics[width=\linewidth]{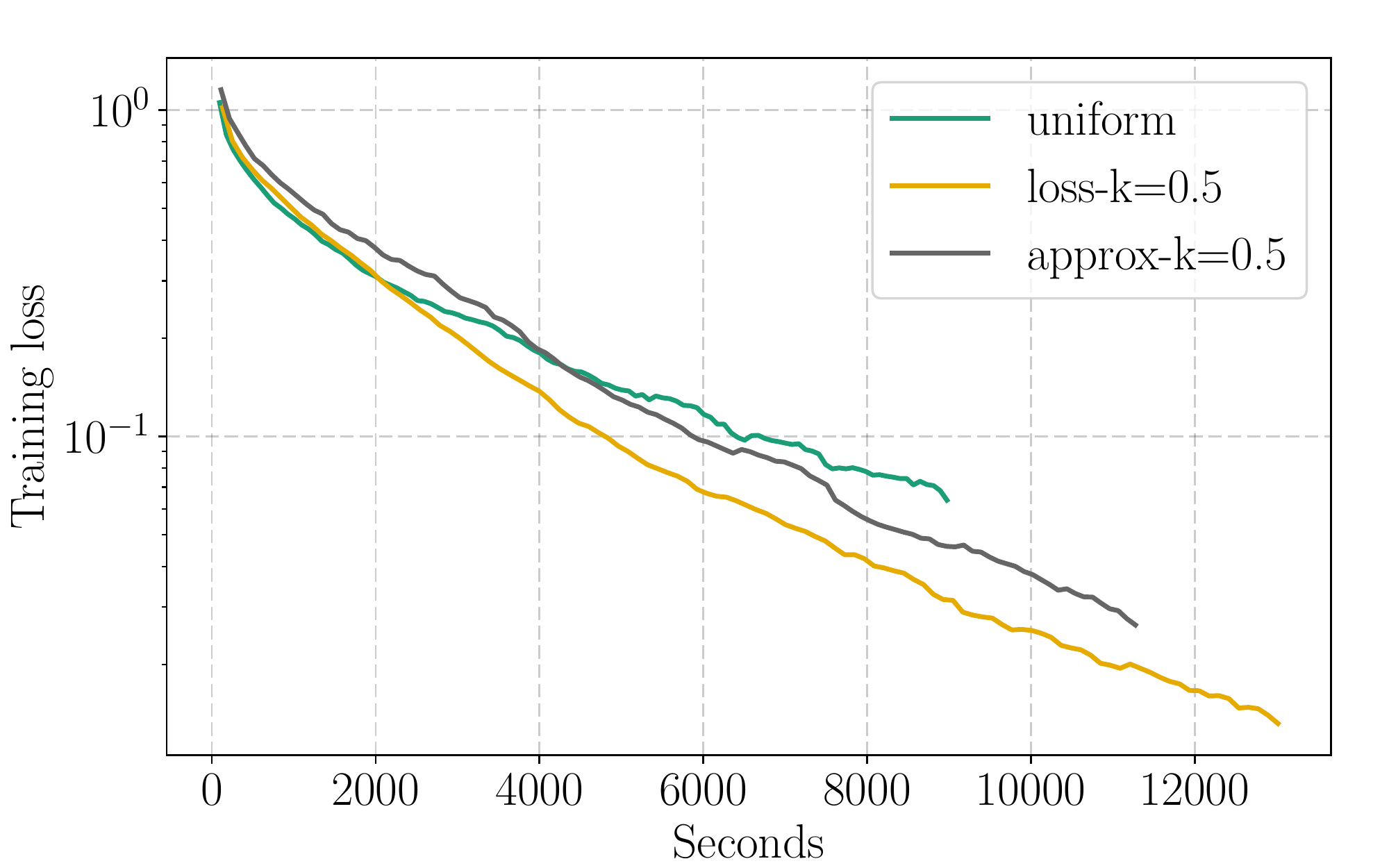}
            \caption{Wall-clock time Speed-up} \label{fig:cifar_seconds}
        \end{subfigure}
    }
    \caption{Training evolution results for CIFAR10 (average of $3$ runs).
    Figure~\ref{fig:cifar_epochs} depicts the speed-up achieved with importance
    sampling in terms of epochs while Figure~\ref{fig:cifar_seconds} shows the
    wall-clock time improvement achieved with importance sampling for the first
    $35,000$ iterations of training (before the learning rate reduction).
    Figure~\ref{fig:cifar_seconds} shows an 11-point moving average.}
    \label{fig:cifar}
\end{figure*}

In this section, we analyze experimentally the behaviour of the proposed
importance sampling schemes. In the first subsection, we show the variance
reduction followed by a comparison with the sampling using the gradient norm
and an analysis of the effects of the hyperparameter $k$ (from equation
\ref{eq:biased_weights}), which controls the bias. In the following
subsections, we present the attained improvements in terms of training time and
test error reduction compared to uniform sampling. For our experiments, we use
three well known benchmark datasets, MNIST~\cite{lecun1998mnist},
CIFAR10~\cite{krizhevsky2009learning} and Penn
Treebank~\cite{marcus1993building}.

We compare the proposed sampling strategies, \textbf{loss}, which uses the
actual model to calculate the importance and \textbf{approx}, which uses our
approximation defined in the previous sections to \textbf{uniform} sampling
which is our baseline and \textbf{gnorm} that uses the gradient norm as the
importance.

In particular, the \textbf{approx} is implemented using an LSTM with a hidden
state of size $32$. The input to the LSTM layer is at most the $10$ previously
observed loss values of a sample (features of one dimension). Regarding the
class label, it is initially projected in $\mathbb{R}^{32}$ and subsequently
concatenated with the hidden state of the LSTM. The resulting $64$ dimensional
feature is used to predict the loss with a simple linear layer.

In all the experiments, we deviate slightly from our theoretical analysis and
algorithm by sampling mini-batches instead of single samples in line
\ref{alg:sample} of Algorithm~\ref{alg:training}, and using the Adam
optimizer~\cite{kingma2014adam} instead of plain Stochastic Gradient Descent in
lines \ref{alg:sgd1} and \ref{alg:sgd2} of Algorithm~\ref{alg:training}.

Experiments were conducted using Keras \cite{chollet2015} with TensorFlow
\cite{abadi2016tensorflow}, and the code to reproduce the experiments will be
provided under an open source license when the paper will be published. When
wall clock time is reported, the experiment was run using an Nvidia K80 GPU and
the reported time is calculated by subtracting the timestamps before starting one
epoch and after finishing one; thus it includes the time needed to transfer
data between CPU and GPU memory.

\subsection{Behaviour of loss-based Importance Sampling}

In this section, we present an in-depth analysis of our \emph{biased importance
sampling}, by performing experiments on the well know hand-written digit
classification dataset, MNIST. Given the simplicity of the dataset, we choose a
down-sized variant of the VGG architecture~\cite{simonyan14c}, with two
convolutional layers with $32$ filters of size $3\times3$, a max-pooling layer
and a dropout layer with a dropout rate of $0.25$, without batch normalization,
followed by the same combination of layers with twice the number of filters.
Finally, we add two fully connected layers with $512$ neurons and dropout with
rate $0.5$, and a final classification layer. All the layers use the ReLU
activation function.

\subsubsection{Comparison with the gradient norm} The first experiment of this
section compares the loss to the gradient norm as importance metrics. We train
each network for $6,000$ iterations with a mini-batch size of $128$. For each
set of parameters we run $10$ independent runs with different random seeds and
report the average. For the pre-sampling of Algorithm~\ref{alg:training} we use
$2 B$ where $B$ is the size of the mini-batch, so in this case the importance
is computed for $256$ randomly selected samples per parameter update.

In Figure~\ref{fig:loss_vs_gnorm}, we observe that our importance sampling
scheme accelerates the training in terms of epochs by $6\times$. It is also
evident that the variance is reduced compared to uniform sampling. In this
experiment, we observed that sampling with the gradient norm is more than $100$
times slower, with respect to wall-clock time, than sampling using the loss.
Moreover, compared to the optimal sampling using the gradient norm, we show
that gradient norm is very sensitive to randomness introduced by the Dropout
layers and thus performs a bit worse than our proposed method. The above could
be one of the reasons why no other work has ever used the gradient norm to
train Deep Neural Network models with Dropout or Batch Normalization.

\subsubsection{Analysis of the sampling bias} 

Using the loss as the importance metric for sampling our mini-batches allows us
to efficiently minimize a surrogate to a soft max-loss, namely the original
loss raised to the power $2-k$ (see equation~\ref{eq:soft_max_loss}). We
theorize (based on \cite{shalev2016minimizing})  that minimizing such a loss
will be beneficial to the generalization ability of the trained model. Although
this will be empirically tested in the last section of the experiments, in this
section, we aim to  empirically validate our theory that using a small $k$
focuses on the max-loss of the dataset.

For this experiment, we train, again, each network for $6,000$ iterations with
$k \in \{0, 0.5, 0.75, 1\}$. Every $300$ iterations, we compute the loss for
each image in the training set. Figure~\ref{fig:k_effects} depicts the moving
average of the maximum loss for all the parameter combinations. We observe that
smaller values for $k$, indeed, minimize the maximum loss while $k=1$, which
corresponds to unbiased importance sampling, ignores the maximum loss in the
same way that uniform sampling does. In addition, using $k < 0$ has been
observed to result in noisy training, which can be explained by the fact that
we increase the gradients of the samples with large gradient norm, thus
resulting in very big steps per iteration. Due to the above, we use $k=0.5$ for
all subsequent experiments, which has been found to be a good compromise
between variance reduction and max-loss minimization.

\subsection{Convergence speed-up}

\begin{figure*}
    \makebox[\textwidth][c]{
        \begin{subfigure}[t]{0.49\textwidth}
            \includegraphics[width=\linewidth]{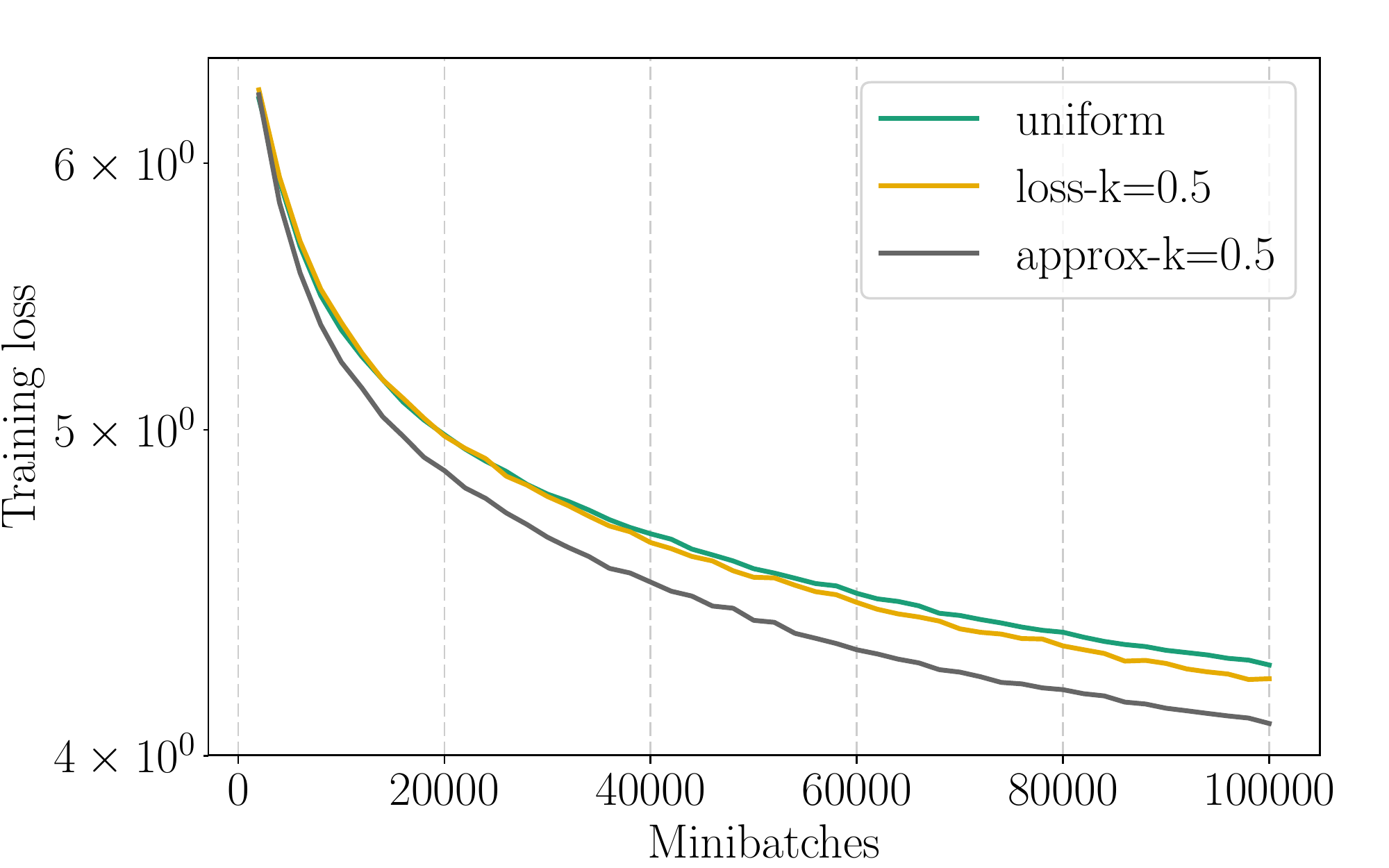}
            \caption{Speed-up per epochs} \label{fig:ptb_epochs}
        \end{subfigure}
        ~
        \begin{subfigure}[t]{0.49\textwidth}
            \includegraphics[width=\linewidth]{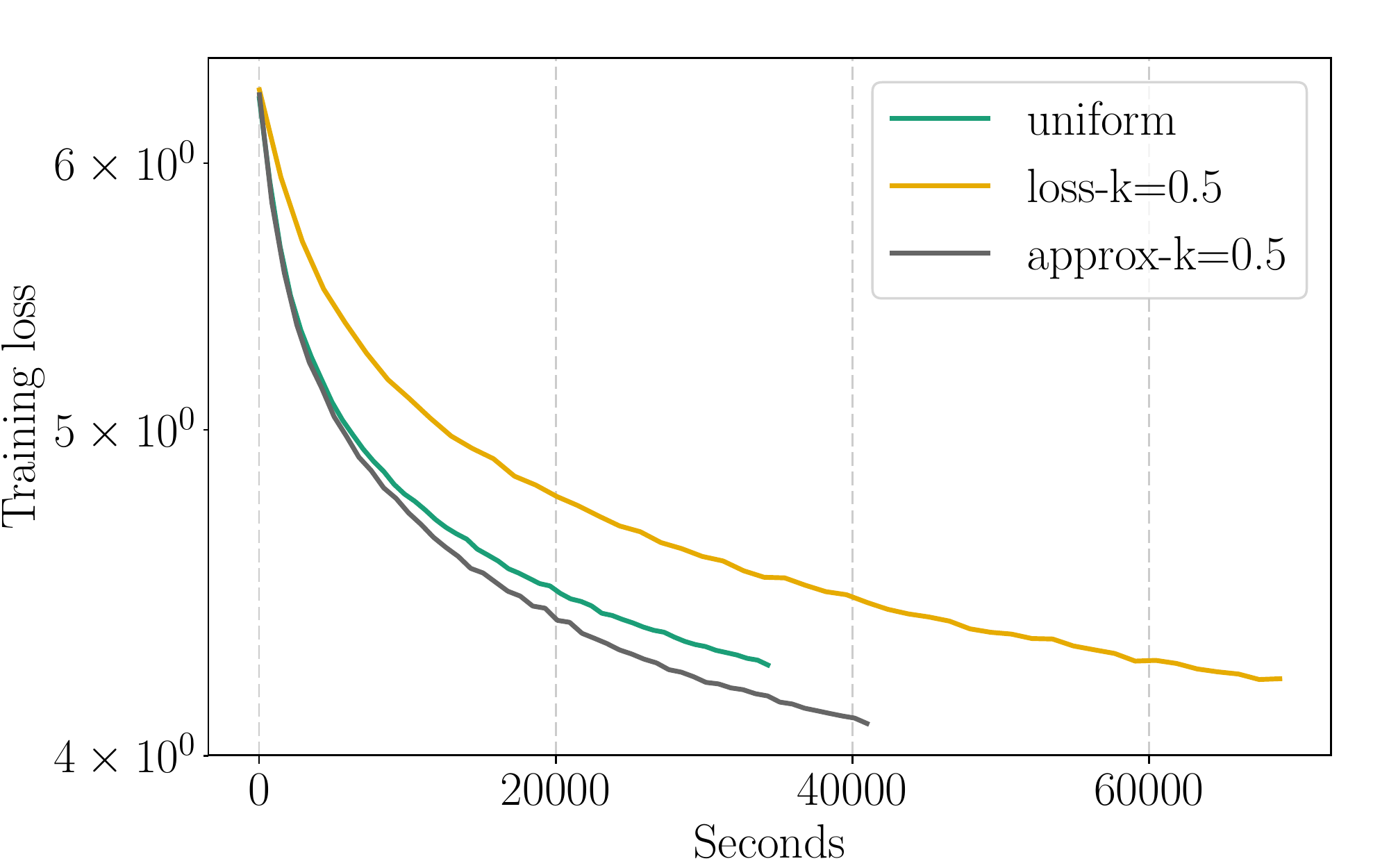}
            \caption{Wall-clock time Speed-up} \label{fig:ptb_seconds}
        \end{subfigure}
    }
    \caption{Training evolution results for Penn Treebank for $100,000$
    iterations (average of $3$ runs).  Figure~\ref{fig:ptb_epochs} depicts the
    speed-up achieved with importance sampling in terms of epochs while
    Figure~\ref{fig:ptb_seconds} shows the wall-clock time improvement.}
    \label{fig:ptb}
\end{figure*}

Our second series of experiments aim to show the convergence speed-up achieved
by sampling according to the loss (\textbf{loss}) and by our fast approximation
trained alongside the full model (\textbf{approx}). To that end we perform
experiments on CIFAR10~\cite{krizhevsky2009learning}, an image classification
dataset commonly used to evaluate new Deep Learning methods, and Penn
Treebank~\cite{marcus1993building}, a widely used word prediction dataset.

\subsubsection{Image classification}

For the CIFAR10 dataset, we developed a VGG-inspired network, with batch
normalization, which consists of three convolution-pooling blocks with $64$,
$128$ and $256$ filters, two fully connected layers of sizes $1024$ and $512$,
and a classification layer. Dropout is used in a similar manner as for the
MNIST network with rates $0.25$ after each pooling layer and $0.5$ between the
fully connected layers. The activation function in all layers is ReLU.

To compare the performance of the methods with respect to training epochs, each
network is trained for $50,000$ iterations using a batch size of $128$ samples.
After $35,000$ iterations we decrease the learning rate by a factor of $10$. We
perform minor data augmentation by creating $500,000$ images
generated by random flipping, horizontal and vertical shifting of the original
images. For the importance sampling strategies, we use smoothing as described
in the previous sections. In particular, the importance of each sample is
incremented by $\frac{1}{2} \bar{L}$, where $\bar{L}$ is the mean of the
training loss computed by the exponential moving average of the mini-batch
losses. Furthermore, we run each method with $3$ different random seeds and
report the mean. Similar to the previous experiment, we calculate the importance on a
uniformly sampled set twice the batch size and resample with importance (see
line \ref{alg:sample} Algorithm~\ref{alg:training}). The results, which are
depicted in Figure~\ref{fig:cifar_epochs}, show that importance sampling
accelerates the training significantly in terms of epochs, resulting in an
order of magnitude better training loss after $50,000$ iterations.

In order for our proposed method to be useful as is, we need to achieve
speed-up with respect to wall clock time as well as epochs. In
Figure~\ref{fig:cifar_seconds}, we plot the training loss with respect to the
seconds passed for the initial $35,000$ iterations of training for CIFAR10. The
experiment shows that our \textbf{approx} model is $\sim 10\%$ faster than
\textbf{loss} and $\sim 20\%$ slower than uniform per epoch. However, we
observe that \textbf{approx} reaches the final loss attained with uniform
sampling in $\sim 15\%$ less time while \textbf{loss} in $\sim 30\%$ less time
shaving off approximately $45$ minutes out of the $2.5$ hours of training.

One of the reasons that our approximation does not perform as well can be
explained by observing the first epochs in Figure~\ref{fig:cifar_epochs} or the
first seconds in Figure~\ref{fig:cifar_seconds}. Our approximation needs to
collect information on the evolution of the losses in the set $\mathcal{H}_t$
(see equation \ref{eq:approx_training}); thus it does not perform well in the
initial epochs. Moreover, the GPU handles very well the small $32\times32$
images thus the full loss incurs less than $50\%$ slowdown per epoch compared
to uniform sampling.

\subsubsection{Word prediction}

\begin{table*}
\caption{Experimental results across all datasets and methods. The results are
averaged over multiple runs ($10$ for MNIST and $3$ for CIFAR10 and Penn Treebank)
and we report the mean and the standard deviation. For the Penn Treebank we
report the perplexity in the validation set and for the rest the classification
error. The learning rate is chosen to maximize the performance of the uniform
sampling strategy for all cases. Runs labeled as {\bf loss} use the trained
model to predict the loss while {\bf approx} use a smaller model to approximate
the loss as defined in the previous sections. An epoch is defined as $300$
mini-batches for MNIST/CIFAR10, and $2000$ for Penn Treebank respectively.}
\label{tab:results}
\vskip 3mm
\makebox[\textwidth][c]{
\def\arraystretch{1.15}
\small
\begin{tabular}{|c|c|ccccc|}
\hline
& \multirow{2}{*}{Epoch} & \multicolumn{5}{|c|}{Method} \\
\cline{3-7}
& & Uniform & Loss k=1 & Loss k=0.5 & Approx k=1 & Approx k=0.5 \\
\hline
\multirow{3}{*}{MNIST}
    & 5  & 0.79\% {$\pm$ 0.11} & 0.73\% {$\pm$ 0.11} &
           0.56\% {$\pm$ 0.04} & 0.83\% {$\pm$ 0.06} & 0.65\% {$\pm$ 0.05} \\
    & 10 & 0.64\% {$\pm$ 0.10} & 0.56\% {$\pm$ 0.07} &
           0.52\% {$\pm$ 0.05} & 0.60\% {$\pm$ 0.06} & 0.61\% {$\pm$ 0.04} \\
    & 20 & 0.56\% {$\pm$ 0.07} & 0.53\% {$\pm$ 0.08} &
           0.47\% {$\pm$ 0.04} & 0.53\% {$\pm$ 0.06} & 0.54\% {$\pm$ 0.06} \\
\hline
\multirow{3}{*}{CIFAR10}
    & 50  & 12.33\% {$\pm$ 0.39} & 10.68\% {$\pm$ 0.65} &
            9.58\% {$\pm$ 0.16} & 11.54\% {$\pm$ 0.55} & 10.78\% {$\pm$ 0.09} \\
    & 100 & 10.19\% {$\pm$ 0.24} & 9.74\% {$\pm$ 0.44} &
            9.20\% {$\pm$ 0.55} & 9.71\% {$\pm$ 0.14} & 9.90\% {$\pm$ 0.47} \\
    & 150 & 7.97\% {$\pm$ 0.10} & 7.77\% {$\pm$ 0.14} &
            7.44\% {$\pm$ 0.14} & 7.61\% {$\pm$ 0.07} & 7.64\% {$\pm$ 0.27} \\
\hline
\multirow{3}{*}{PTB}
    & 10 & 184.5 {$\pm$ 1.56} & 178.3 {$\pm$ 2.47} &
           187.2 {$\pm$ 1.61} & 179.4 {$\pm$ 6.20} & 180.0 {$\pm$ 4.78} \\
    & 30 & 138.6 {$\pm$ 0.60} & 137.8 {$\pm$ 1.53} &
           139.0 {$\pm$ 0.72} & 136.2 {$\pm$ 2.98} & 134.3 {$\pm$ 2.19}  \\
    & 50 & 130.3 {$\pm$ 0.63} & 129.9 {$\pm$ 1.27} &
           130.5 {$\pm$ 0.21} & 128.2 {$\pm$ 2.02} & 127.4 {$\pm$ 1.45} \\
\hline
\end{tabular}
\hspace*{0.5em}
}
\end{table*}

To assess the generality of our method, we  conducted experiments on a language
modeling task. We used the Penn Treebank language dataset, as preprocessed by
\citet{mikolov2011empirical}\footnote{\url{http://www.fit.vutbr.cz/~imikolov/rnnlm/}},
and a recurrent neural network as the language model. Initially, we split the
dataset into sentences and add an ``End of sentence'' token (resulting in a
vocabulary of 10,001 words). For each word we use the previous 20 words, if
available, as context for the neural network. Our language model is similar to
the small LSTM used by \citet{zaremba2014recurrent} with 256 units, a word
embedding in $\mathbb{R}^{64}$ and dropout with rate 0.5.

In terms of training the language models, we use a batch size of 128 words and
train each network for $100,000$ iterations. For the importance sampling
strategies, we use constant smoothing instead of adaptive as in the CIFAR10
experiment. To choose the smoothing constant, we experiment with the values
$\{0.5, 1, 2.5\}$ and choose 0.5 because it performs better in terms of
variance minimization during training. Finally, we run each method with 3
different random seeds and report the mean. Similar to the two previous
experiments, we pre-sample uniformly 256 words, namely twice the mini-batch
size.

The results of the language modeling experiment are presented in Figures
\ref{fig:ptb_epochs} and \ref{fig:ptb_seconds}. We observe (from
Figure~\ref{fig:ptb_epochs}), that using importance sampling indeed improves
the convergence speed in terms of epochs. However, using the full \textbf{loss}
performs marginally better than \textbf{uniform} random sampling. The gradient
variance is reduced more using the actual model in this case as well, which
leads us to the following two explanations:
\begin{enumerate}
    \item A small amount of noise is beneficial to the training procedure (we
    could add noise to the \textbf{loss} method by increasing the smoothing
    parameter)
    \item The slower changes in sample importance between iterations (side
    effect of using the history for our approximation) are beneficial to
    importance sampling
\end{enumerate}
Moreover, it is important to note in Figure~\ref{fig:ptb_seconds} that even if
using the \textbf{loss} performed better with respect to epochs than
\textbf{approx}, it requires twice as much time to train a given number of
epochs compared to \textbf{uniform}. On the other hand, \textbf{approx} is
merely $\sim 10\%$ slower per epoch, resulting $\mathbf{20\%}$ less time to
achieve the final training loss, shaving almost \textbf{$2$ hours} from the
$9.5$ hours required for \textbf{uniform} sampling to perform $100,000$
iterations.

We have also attempted to compare our method with the method presented by
\citet{loshchilov2015online}. Although we managed to find a set of
hyperparameters that achieve comparable speed-ups for CIFAR10, their method
completely fails to converge for Penn Treebank, for the parameters we tested.
This behaviour, reflects the experiments of the original paper and can possibly
be explained by the ad-hoc aggressive weighting of samples that they employ.

\subsection{Generalization improvement}

In this experimental section we analyze the performance of the previously
trained models on unseen data. Our goal is to show that importance sampling
does not cause overfitting or affect the performance on the test set in any
negative way.

Towards this end, we report the classification error on the test set for
experiments conducted using the MNIST and CIFAR10 datasets while for the
language modeling task we report the perplexity on the validation set.  The
results, for all the experiments, are summarized in Table~\ref{tab:results},
where the values are depicted for the early stages, medium stages and final
stages of training.

We observe that the test error is the same or better than using uniform
sampling in all cases. More importantly, Table~\ref{tab:results} shows that our
added bias is indeed helpful in reducing the generalization error, especially
in the first stages of training. For MNIST we observe a $0.2\%$ reduction in
error at the 5-th epoch when using $k=0.5$ and for CIFAR10 approximately $1\%$
at the 30-th epoch. For Penn Treebank the case is not as clear although we
observe a reduction of $2$ at the 30-th epoch when compared to the
\textbf{approx} method with $k=1$.

Finally, we have empirically observed that using $k=0.5$ results in a robust
procedure that requires less smoothing and is affected less by noisy importance
estimation.

\section{Conclusion}

In this paper, we show theoretically and empirically that the loss can be used
as an importance metric to accelerate the training of any Deep Neural Network
architecture including recurrent ones. In particular, we propose a biased
importance sampling scheme (with and without a lightweight approximation), that
results in significant speed-up both in terms of wall-clock time and epochs.
Using this sampling scheme, we train Deep Neural Networks for image
classification and word prediction tasks $20\%$ to $30\%$ faster than using
uniform sampling with Adam.

Two important points remain to be investigated thoroughly. The first is the
construction of a more principled method to approximate the loss of the full
model. Modern neural networks contain hundreds of layers; thus it is hard to
imagine that there exists no approximation to their output that can fully take
advantage of the cost/accuracy trade-off.

The second point is the exploitation of the reduced variance of the gradient
estimator by some other part of the stochastic optimization procedure.
Importance sampling, for instance, could be used to perform stochastic line
search or improve the robustness of stochastic Quasi-Newton methods.

\bibliographystyle{aaai}
\bibliography{references}

\clearpage

\standalonetitle{Appendix}
\appendix
\section{Justification for sampling with the loss}

The goal of this analysis is to justify the use of the loss as the importance
metric instead of the gradient norm and provide additional evidence (besides
the experiments in the paper) that it is an improvement over uniform sampling.

Initially, we show that sampling with the loss is better at minimizing an upper
bound to the variance of the gradients than uniform sampling.  Subsequently, we
provide additional empirical evidence that the loss is a surrogate for the
gradient norm by computing the exact gradient norm for a limited number of
samples and comparing it to the loss.

\subsection{Theoretical justification} \label{sec:theoretical}

Initially, we show in lemma \ref{lem:loss_order} that the most common loss
functions for classification and regression define the same ordering as their
gradient norm. Subsequently, we use this derivation together with an upper
bound to the variance of the gradients and show that sampling according to the
loss reduces this upper bound compared to uniform sampling.

Firstly, we prove two lemmas that will be later used in the analysis.

\begin{lemma} \label{lem:convex_order}
Let $f(x): \mathbb{R} \to \mathbb{R}$ be a strictly convex monotonically
decreasing function then
\begin{equation}
f(x_1) > f(x_2) \iff \abs{\diff{f}{x_1}} > \abs{\diff{f}{x_2}} \ \forall \ x_1, x_2 \in
\mathbb{R}.
\end{equation}
\end{lemma}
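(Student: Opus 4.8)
The plan is to collapse both sides of the stated biconditional onto a single simple condition on the arguments, namely $x_1 < x_2$, and then conclude by transitivity of the biconditional. Throughout I would treat $f$ as differentiable, which is implicit in the statement since it refers to $\diff{f}{x}$. Two structural observations drive the whole argument: because $f$ is monotonically decreasing its derivative is nonpositive, $\diff{f}{x} \le 0$ for all $x$, so that $\abs{\diff{f}{x}} = -\diff{f}{x}$; and because $f$ is strictly convex and differentiable, its derivative $\diff{f}{x}$ is strictly increasing.

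First I would treat the left-hand inequality. Strict convexity prevents $f$ from being affine (in particular constant) on any interval, so a monotonically decreasing \emph{strictly convex} function is in fact strictly decreasing. Hence $f(x_1) > f(x_2) \iff x_1 < x_2$. Next I would treat the right-hand inequality using the two observations above. Removing the absolute value via $\abs{\diff{f}{x}} = -\diff{f}{x}$ gives $\abs{\diff{f}{x_1}} > \abs{\diff{f}{x_2}} \iff \diff{f}{x_1} < \diff{f}{x_2}$, and since $\diff{f}{x}$ is strictly increasing, this last inequality holds if and only if $x_1 < x_2$. Chaining the two derived equivalences, both $f(x_1) > f(x_2)$ and $\abs{\diff{f}{x_1}} > \abs{\diff{f}{x_2}}$ are equivalent to $x_1 < x_2$, which proves the lemma.

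The main obstacle is not any deep computation but careful sign bookkeeping: the monotonic decrease flips the direction of the inequality when the absolute value is stripped off, and one must verify that the absolute value can be removed \emph{globally}, i.e. that the derivative never changes sign; this is exactly what monotonicity guarantees. A secondary point worth stating explicitly is the upgrade from ``monotonically decreasing'' to ``strictly decreasing'' together with the strict monotonicity of $\diff{f}{x}$: both follow from strict convexity, and they are precisely what make the two equivalences strict rather than merely one-directional.
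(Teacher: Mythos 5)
Your proof is correct and follows essentially the same route as the paper's: both arguments reduce each side of the biconditional to the single condition $x_1 < x_2$, using that the derivative is nonpositive (from monotonic decrease) and increasing (from strict convexity), and then chain the equivalences. If anything, your version is slightly more careful than the paper's, which assumes $\ddiff{f}{x} > 0$ everywhere (a condition not actually implied by strict convexity, e.g.\ $f$ need not be twice differentiable), whereas you invoke only the strict monotonicity of $\diff{f}{x}$ and explicitly justify the upgrade from monotonically to strictly decreasing.
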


\begin{proof}
By the definition of $f(x)$ we have
\begin{align}
\ddiff{f}{x} > 0 \ \forall \ x \\
\diff{f}{x} \leq 0 \ \forall \ x,
\end{align}
which means that $\diff{f}{x}$ is monotonically increasing and non-positive. In
that case we have
\begin{align}
x_1 < x_2 & \iff f(x_1) > f(x_2) \\
x_1 < x_2 & \iff \diff{f}{x_1} < \diff{f}{x_2} \iff \abs{\diff{f}{x_1}} > \abs{\diff{f}{x_2}}
\end{align}
therefore proving the lemma.
\end{proof}

\begin{lemma} \label{lem:loss_order}
Let $L(\psi) : D \to \mathbb{R}$ be either the negative log
likelihood or the squared error loss function defined respectively as
\begin{equation}
\begin{aligned}
L_1(\psi) & = - y^T log(\psi) & y \in \{0, 1\}^d \ \text{s.t.} y^T y = 1 \\
    & & D=[0, 1]^d \ \text{s.t.} \norm{\psi}_1 = 1 \\
L_2(\psi) & = \norm{y - \psi}_2^2 & y \in \mathbb{R}^d \quad
    D=\mathbb{R}^d
\end{aligned}
\end{equation}
where $y$ is the target vector. Then
\begin{equation}
L(\psi_1) > L(\psi_2) \iff \norm{\nabla_{\psi} L(\psi_1)} > \norm{\nabla_{\psi} L(\psi_2)}
\end{equation}
\end{lemma}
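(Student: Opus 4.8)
The plan is to dispatch the two loss functions separately, and in each case to reduce both the loss and its gradient norm to monotone functions of a single scalar, so that the equivalence of orderings follows either from Lemma~\ref{lem:convex_order} or from an elementary monotonicity argument.

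First I would treat the negative log likelihood. Because $y$ is one-hot ($y \in \{0,1\}^d$ with $y^T y = 1$), let $c$ be the index with $y_c = 1$; then $L_1(\psi) = -\log \psi_c$ depends on $\psi$ only through the predicted probability of the true class. Differentiating, the only nonzero component of $\nabla_{\psi} L_1$ is $-1/\psi_c$, so $\norm{\nabla_{\psi} L_1(\psi)} = 1/\psi_c$. Taking $f(x) = -\log x$, which is strictly convex and strictly decreasing on $(0,1]$, I would write $L_1(\psi) = f(\psi_c)$ and $\norm{\nabla_{\psi} L_1(\psi)} = \abs{f'(\psi_c)}$, and then invoke Lemma~\ref{lem:convex_order} with $x_1 = \psi_{1,c}$ and $x_2 = \psi_{2,c}$ to conclude directly that $L_1(\psi_1) > L_1(\psi_2) \iff \norm{\nabla_{\psi} L_1(\psi_1)} > \norm{\nabla_{\psi} L_1(\psi_2)}$.

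For the squared error I would argue directly, since the governing scalar function is increasing and Lemma~\ref{lem:convex_order} is phrased for decreasing functions. Writing $r_i = \norm{y - \psi_i}$ and using $\nabla_{\psi} L_2 = 2(\psi - y)$, I have $L_2(\psi_i) = r_i^2$ and $\norm{\nabla_{\psi} L_2(\psi_i)} = 2 r_i$. Both $r \mapsto r^2$ and $r \mapsto 2r$ are strictly increasing on $[0,\infty)$, so loss and gradient norm are each strictly increasing in the residual norm $r_i$ and therefore induce the same ordering; equivalently, $\norm{\nabla_{\psi} L_2} = 2\sqrt{L_2}$ is a strictly increasing function of $L_2$.

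The conceptual crux, and the step I would state most carefully, is the reduction in each case to one scalar that controls both quantities --- the true-class probability $\psi_c$ for the NLL (this is exactly where the one-hot assumption is used, collapsing the full output vector to a single coordinate) and the residual norm $r$ for the squared error. The only technical point worth flagging is that Lemma~\ref{lem:convex_order}, although stated on all of $\mathbb{R}$, is applied to $-\log$ on the probability range $(0,1]$, where strict convexity and strict monotone decrease both hold; granting that, the equivalence needs no further estimation and follows immediately from the two monotone reductions. I do not anticipate a genuine obstacle here, only the bookkeeping of the gradient norms.
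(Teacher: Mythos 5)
Your proposal is correct and follows essentially the same route as the paper's proof: the squared-error case is handled by observing $\norm{\nabla_{\psi} L_2(\psi)} = 2\sqrt{L_2(\psi)}$ (the paper writes this as $\norm{\nabla_\psi L(\psi)}_2^2 = 4L(\psi)$), and the negative log likelihood case uses the one-hot structure of $y$ to reduce to the scalar function $f(x) = -\log(x)$ and then invokes Lemma~\ref{lem:convex_order}. Your write-up merely makes explicit some details the paper leaves implicit, such as the gradient norm $1/\psi_c$ and the restriction of Lemma~\ref{lem:convex_order} to the domain $(0,1]$.
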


\begin{proof}
In the case of the squared error loss we have
\begin{equation}
\norm{\nabla_{\psi} L(\psi)}_2^2
     = \norm{-2 (y - \psi)}_2^2
     = 4 L(\psi),
\end{equation}
thus proving the lemma.

For the log likelihood loss we can use the fact that only one dimension of $y$
can be non-zero and prove it using lemma \ref{lem:convex_order} because
$f(x) = -log(x)$ is a strictly convex monotonically decreasing function.
\end{proof}

\subsubsection{Main analysis} \label{sec:theoretical_main}

We can now prove theorem~\ref{the:loss_upper_bound} which is also written below
for completeness.

\begin{theorem} \label{the:loss_upper_bound}
Let $G_i = \norm{\nabla_{\theta_t} L(\Psi(x_i; \theta_t), y_i)}$ and $M = \max
G_i$. There exist $K > 0$ and $C < M$ such that
\begin{equation}
\frac{1}{K} L(\Psi(x_i; \theta_t), y_i) + C \geq G_i \quad \forall i
\end{equation}
\end{theorem}

\begin{proof}
The goal of importance sampling is to minimize
\begin{equation}
\Tr{\V{\nabla_{\theta} L(\Psi(x_i; \theta), y_i)}} = \E{\norm{\nabla_{\theta}
    L(\Psi(x_i; \theta), y_i)}_2^2}.
\end{equation}
To perform importance sampling, we sample according to the distribution $P$
with probabilities $p_i$ and use per sample weights $\alpha_i = \frac{1}{N
p_i}$ in order to have an unbiased estimator of the gradients. Consequently,
the variance of the gradients is
\begin{align}
\E[P]{\norm{\alpha_i \nabla_{\theta} L(\Psi(x_i; \theta), y_i)}_2^2} &=\\
\sum_{i=1}^N p_i \alpha_i^2 \norm{\nabla_{\theta} L(\Psi(x_i; \theta), y_i)}_2^2 &=\\
\sum_{i=1}^N \frac{1}{N p_i} \frac{1}{N} \norm{\nabla_{\theta} L(\Psi(x_i; \theta), y_i)}_2^2 &=\\
\sum_{i=1}^N \alpha_i \frac{1}{N} \norm{\nabla_{\theta} L(\Psi(x_i; \theta), y_i)}_2^2.
\end{align}

Assuming that the neural network is Lipschitz continuous (assumption that holds
when the weights are not infinite) with constant $K$, we derive the following
upper bound to the variance
\begin{align}
&\E[P]{\norm{\alpha_i \nabla_{\theta} L(\Psi(x_i; \theta), y_i)}_2^2} \leq\\
&    \sum_{i=1}^N \alpha_i \frac{1}{N} \norm{\nabla_{\theta} \Psi(x_i; \theta)}_2^2
        \norm{\nabla_{\Psi(x_i; \theta)} L(\Psi(x_i; \theta), y_i)}_2^2 \leq\\
&    K^2 \sum_{i=1}^N \alpha_i \frac{1}{N}
        \norm{\nabla_{\Psi(x_i; \theta)} L(\Psi(x_i; \theta), y_i)}_2^2. \label{eq:gnorm_upper_bound}
\end{align}

Since we have a finite set of samples, there exists a constant $C$ such that
\begin{equation}
\begin{aligned}
L(\Psi(x_i; \theta), y_i) + C &\geq
    \norm{\nabla_{\Psi(x_i; \theta)} L(\Psi(x_i; \theta), y_i)} \\
& \forall \ i \in \{1, 2, \dots, N\}. \label{eq:loss_upper_bound}
\end{aligned}
\end{equation}
However, using lemma \ref{lem:loss_order} we know that this upper bound is
better than uniform because $L(\Psi(x_i; \theta), y_i)$ and
$\norm{\nabla_{\Psi(x_i; \theta)} L(\Psi(x_i; \theta), y_i)}$ grow and shrink
in tandem. In particular the following equation holds, where $M = \max
\norm{\nabla_{\Psi(x_i; \theta)} L(\Psi(x_i; \theta), y_i)}$ and $C < M$
\begin{equation} \label{eq:loss_upper_bound2}
\begin{aligned}
& L(\Psi(x_i; \theta), y_i) + C - \norm{\nabla_{\Psi(x_i; \theta)}
  L(\Psi(x_i; \theta), y_i)} < \\
& M - \norm{\nabla_{\Psi(x_i; \theta)}
  L(\Psi(x_i; \theta), y_i)} \quad \forall i.
\end{aligned}
\end{equation}

Using equations \ref{eq:gnorm_upper_bound}, \ref{eq:loss_upper_bound} and
\ref{eq:loss_upper_bound2} and replacing the constants in
\ref{eq:gnorm_upper_bound} with $K$ we derive the original claim concluding the
proof.
\end{proof}

\subsection{Empirical justification} \label{sec:empirical}

In this section, we provide empirical evidence regarding the use of the loss
instead of the gradient norm as the importance metric. Specifically, we conduct
experiments computing the exact gradient norm and the loss value for the first
20,000 samples during training. The gradient norm is normalized in each
mini-batch to account for the changes in the norm of the weights.

Subsequently, we plot the loss sorted by the gradient norm. If there exists $C$
such that $L(\Psi(x_i; \theta), y_i) = C \norm{\nabla_{\theta} L(\Psi(x_i;
\theta), y_i)}$ we should see approximately a line. In case of order
preservation we should see a monotonically increasing function.

\begin{figure*}
\makebox[\textwidth][c]{
    \begin{subfigure}[b]{0.36\textwidth}
        \includegraphics[width=\textwidth]{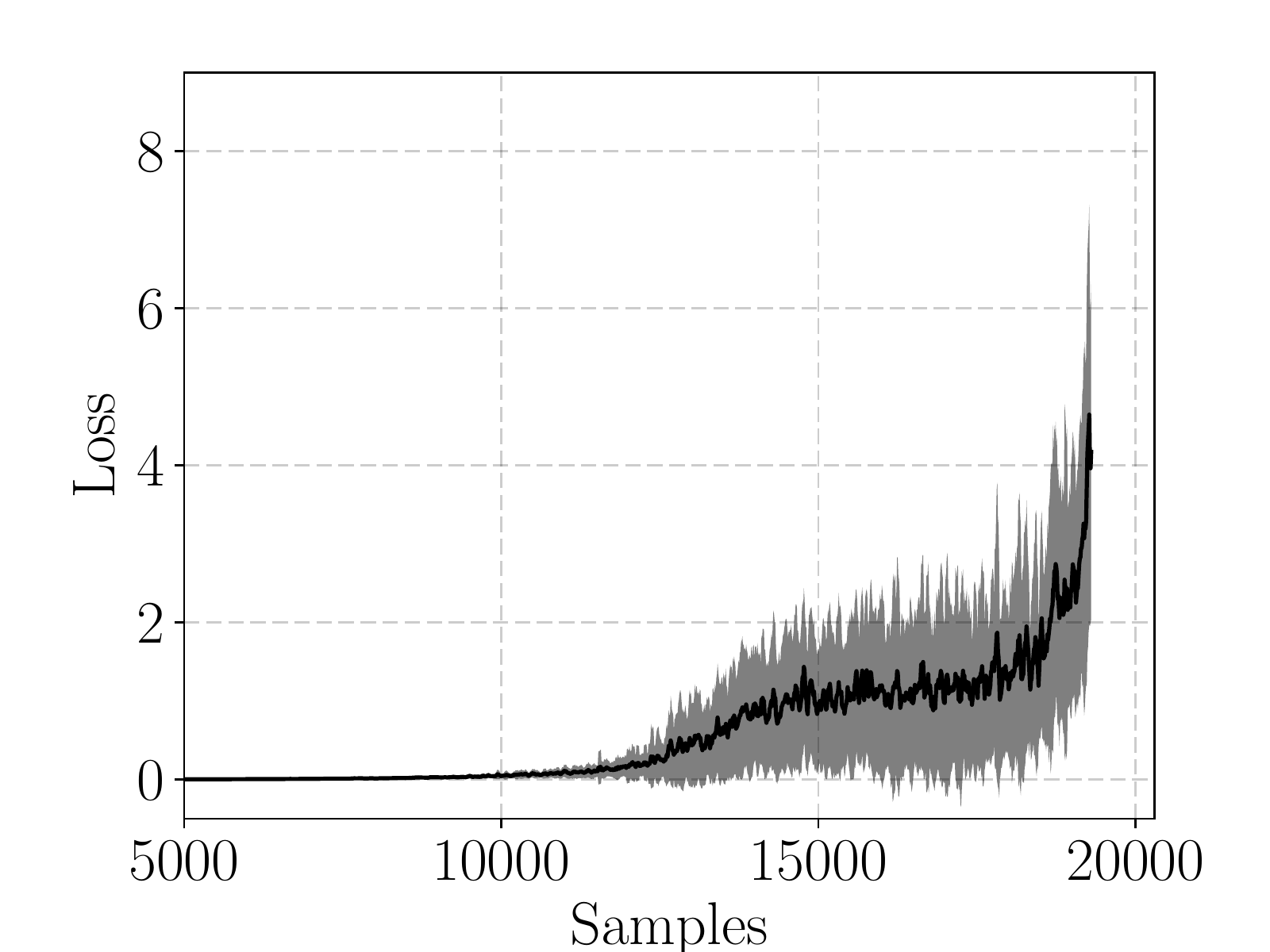}
        \caption{MNIST}
        \label{fig:mnist_gnorm_loss}
    \end{subfigure}
    \begin{subfigure}[b]{0.36\textwidth}
        \includegraphics[width=\textwidth]{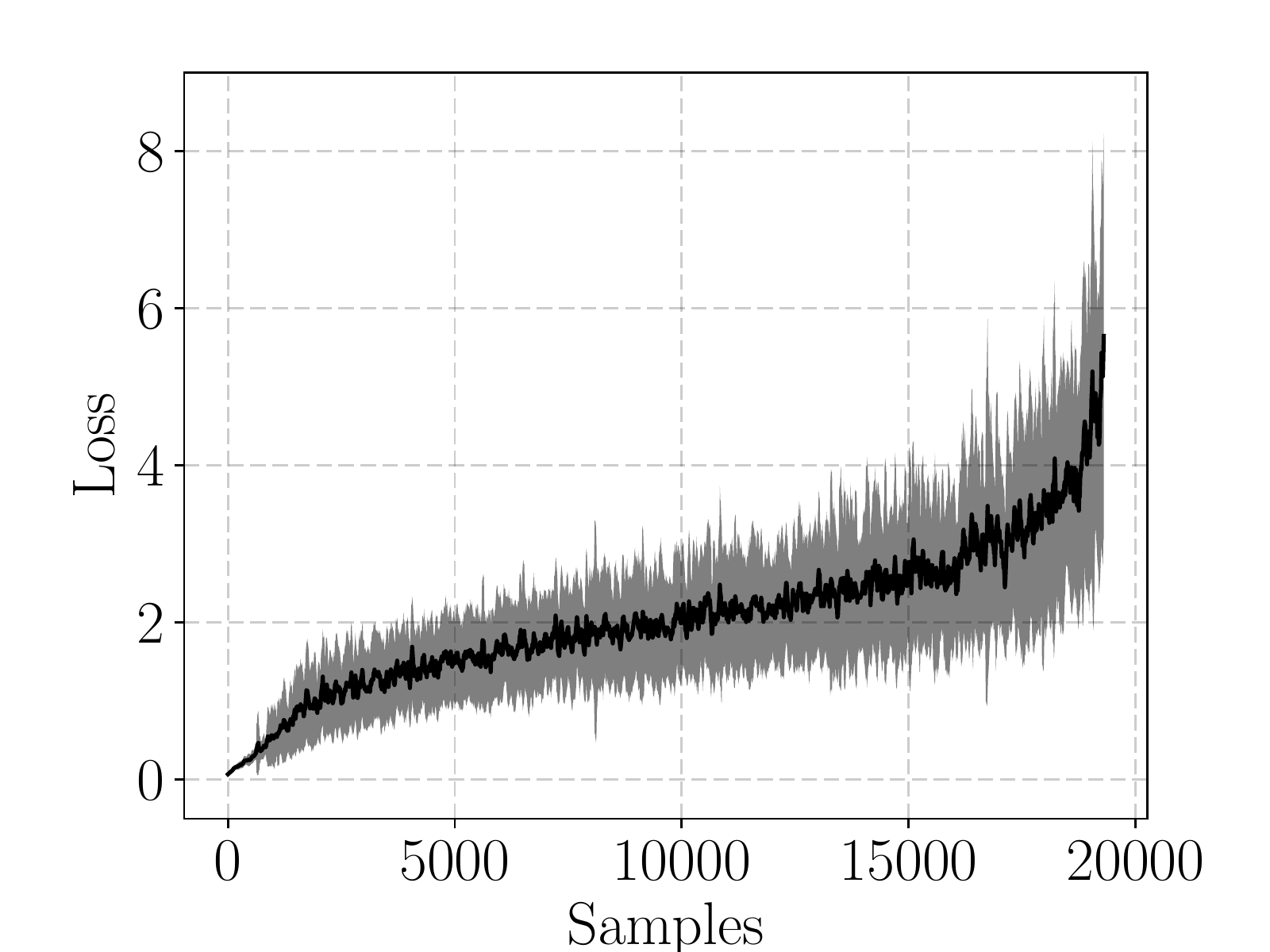}
        \caption{CIFAR10}
        \label{fig:cifar_gnorm_loss}
    \end{subfigure}
    \begin{subfigure}[b]{0.36\textwidth}
        \includegraphics[width=\textwidth]{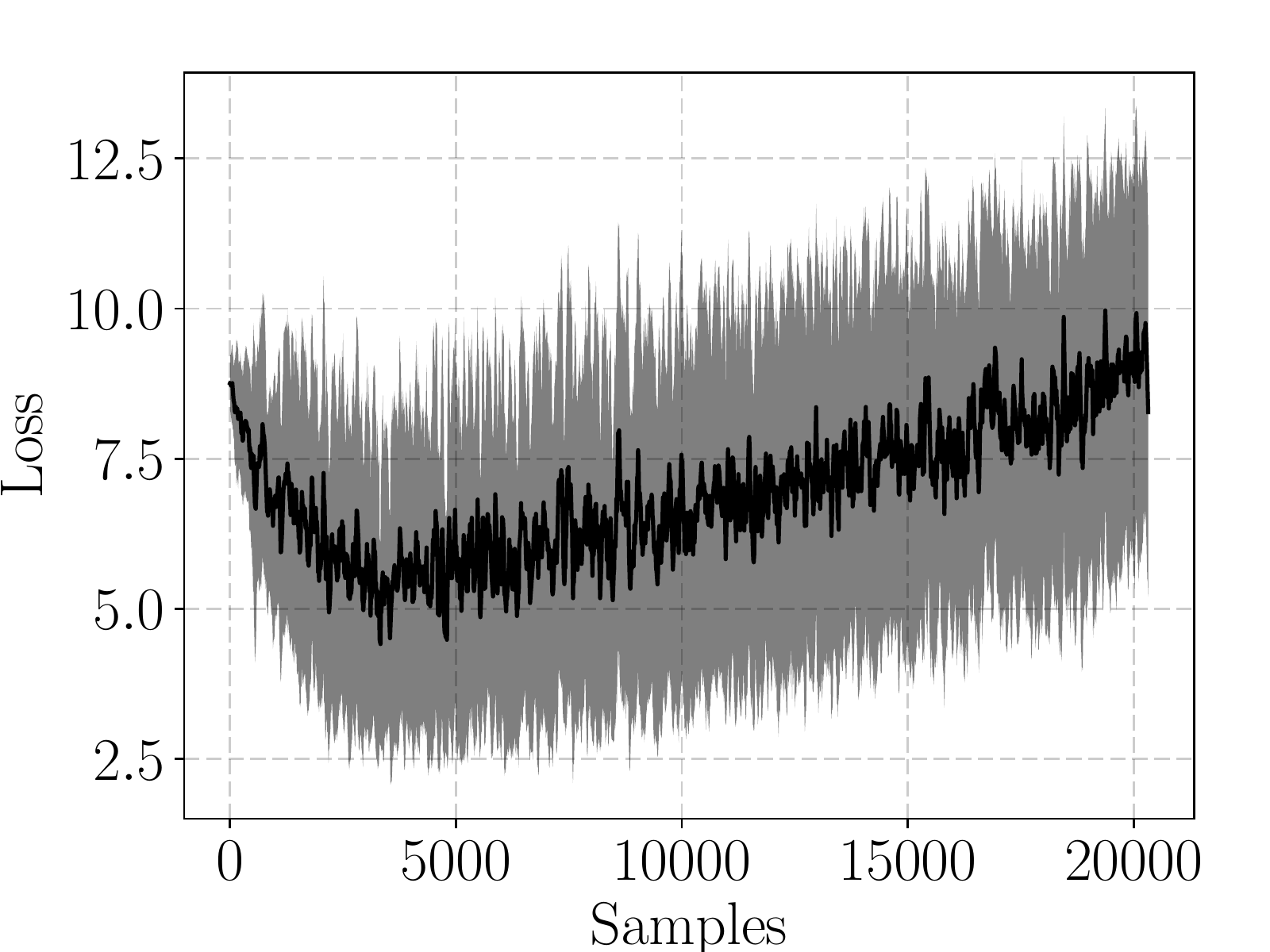}
        \caption{Penn Treebank}
        \label{fig:ptb_gnorm_loss}
    \end{subfigure}
    \hspace*{1em}
    }
    \caption{Loss values sorted by gradient norm. The solid line is a moving
    average ($50$ samples window) and the shaded area denotes one standard
    deviation.} \label{fig:gnorm_loss}
\end{figure*}

In figure \ref{fig:gnorm_loss}, we observe a correlation between the gradient
norm and the loss. In all cases, samples with high gradient norm also have high
loss. In the Penn Treebank dataset, (Figure~\ref{fig:ptb_gnorm_loss}), there
exist some samples with high loss but very low gradient norm. This can be
explained because LSTMs use the $\text{tanh}(\cdot)$ activation function which
can have very low gradient but incorrect output. We note that this cannot hurt
performance, it just means that we waste some CPU/GPU cycles on samples that are
incorrectly classified but will not affect the parameters heavily.

\vfill\eject

\section{Loss tracking by our approximation}

In this section, we present empirical evidence that the proposed approximate
model predicts the loss with reasonable accuracy. In order to describe the
experiment, we introduce the following notation. Let $\hat{L}_i$ be the loss
value computed for the $i$-th sample and used as importance in the sampling
procedure. In addition, let $L_i$ be the loss computed in the forward pass of
the optimization procedure, with the same parameters and for the same sample.
As mentioned in the paper, $L_i$ and $\hat{L}_i$ can differ even if we use the
full network to compute $\hat{L}_i$, due to stochasticity introduced by Dropout
layers and Batch Normalization.

To compute how well $\hat{L}_i$ ``tracks'' $L_i$ we solve the following
least squares problem for every minibatch. The value of the coefficient $a$
shows how well $\hat{L}_i$ approximates $L_i$.

\begin{equation} \label{eq:prediction_quality}
a^*, b^* = \argmin_{a, b} \sum_{i \in B} \left(a \hat{L}_i + b - L_i\right)^2
\end{equation}

In Figure~\ref{fig:tracking}, we present the evolution of the coefficient $a$
for the experiments in CIFAR10 and Penn Treebank. Regarding CIFAR10, we observe
that the loss is really hard to track, even using the full Neural Network.
However, the approximation presents a positive correlation with $L_i$ which
appears to be enough for reducing the variance of the gradients. Moreover,
after $15,000$ mini-batches the approximation ``tracks'' the loss with the same
accuracy as the full network. On the other hand, in the Penn Treebank experiment,
there seems to be less noise and $L_i$ is predicted accurately by both methods.
Once again, we observe, that as the training progresses our approximation seems
to converge towards the same quality of predictions as using the full network.

\begin{figure*}
\makebox[\textwidth][c]{
    \begin{subfigure}[b]{0.49\textwidth}
        \includegraphics[width=\textwidth]{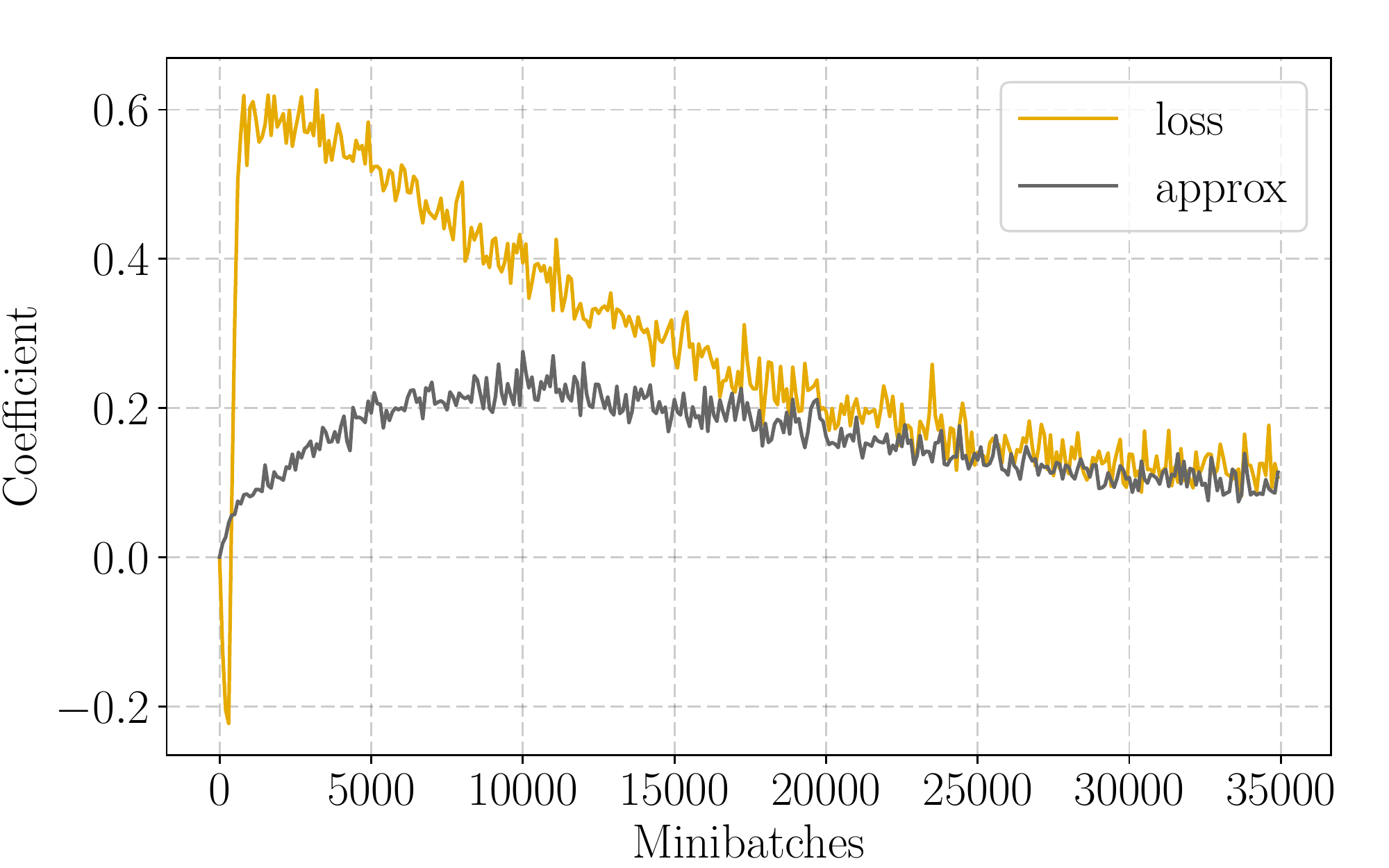}
        \caption{CIFAR10}
        \label{fig:tracking_cifar}
    \end{subfigure}
    \begin{subfigure}[b]{0.49\textwidth}
        \includegraphics[width=\textwidth]{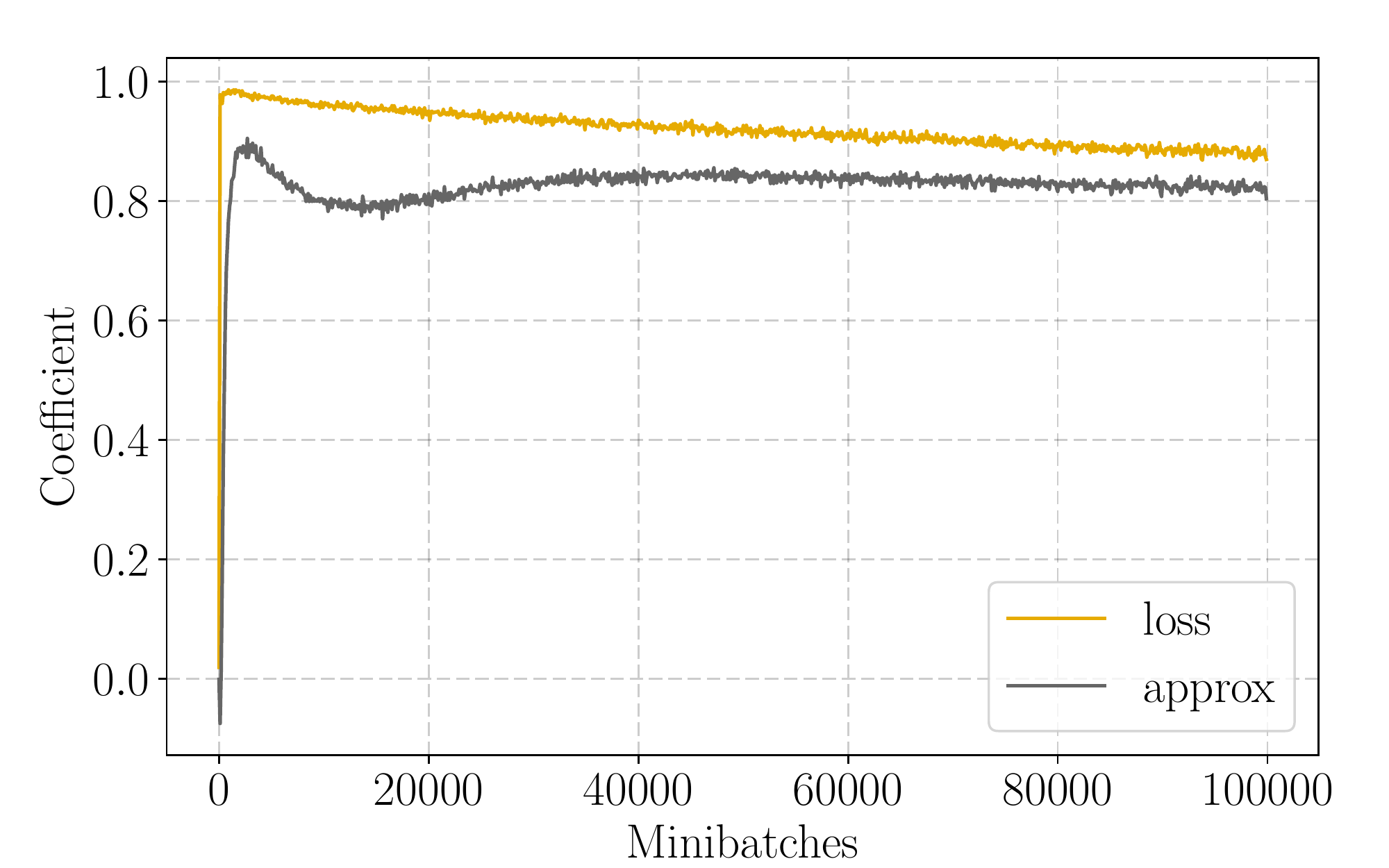}
        \caption{Penn Treebank}
        \label{fig:tracking_ptb}
    \end{subfigure}
    \hspace*{1em}
    }
    \caption{The coefficient $a$ from equation~\ref{eq:prediction_quality} is
    plotted for \textbf{loss} and \textbf{approx}. A coefficient value of $1$
    means that the loss is tracked perfectly while $0$ means almost uniform
    sampling. Due to randomness introduced by layers such as Dropout and Batch
    Normalization, even using the full network (\textbf{loss}) does not mean
    perfect prediction.}
    \label{fig:tracking}
\end{figure*}

\checknbdrafts

\end{document}